\definecolor{blue}{RGB}{0, 114, 188}
\definecolor{orange}{RGB}{255, 127, 14}
\definecolor{yellow}{RGB}{255, 214, 10}
\definecolor{magenta}{RGB}{255,153,204}
\definecolor{green}{RGB}{44,160,44}
\definecolor{teal}{RGB}{102, 204, 204}
\definecolor{purple}{RGB}{148, 103, 189}
\definecolor{red}{RGB}{214, 39, 40}
\pgfplotsset{compat=newest}
\newtheorem{proposition}{Proposition}
\DeclareMathOperator*{\argmin}{\arg\,min}
\newcommand{\calT}[0]{\mathcal{T}}
\newcommand{\calD}[0]{\mathcal{D}}
\newcommand{\calX}[0]{\mathcal{X}}
\title{MuRAL-CPD: Active Learning for Multiresolution Change Point Detection}
\author{%
  \IEEEauthorblockN{%
    Stefano Bertolasi\IEEEauthorrefmark{1}\IEEEauthorrefmark{2}\IEEEauthorrefmark{3},
    Diego Carrera\IEEEauthorrefmark{2},
    Diego Stucchi\IEEEauthorrefmark{2},
    Pasqualina Fragneto\IEEEauthorrefmark{2},
    Luigi Amedeo Bianchi\IEEEauthorrefmark{1}%
  }%
  \IEEEauthorblockA{\IEEEauthorrefmark{1}%
    Università di Trento, Trento, Italy\\
    luigiamedeo.bianchi@unitn.it}
  \IEEEauthorblockA{\IEEEauthorrefmark{2}%
    STMicroelectronics, Agrate Brianza, Italy\\
    \{diego.carrera, diego.stucchi, pasqualina.fragneto\}@st.com}%
    \IEEEauthorblockA{\IEEEauthorrefmark{3}%
    Current address: Politecnico di Milano, Milano, Italy\\
    stefano.bertolasi@polimi.it
    }
}
\begin{document}

\maketitle

\begin{abstract}
Change Point Detection (CPD) is a critical task in time series analysis, aiming to identify moments when the underlying data-generating process shifts. Traditional CPD methods often rely on unsupervised techniques, which lack adaptability to task-specific definitions of change and cannot benefit from user knowledge. To address these limitations, we propose MuRAL-CPD, a novel semi-supervised method that integrates active learning into a multiresolution CPD algorithm. MuRAL-CPD leverages a wavelet-based multiresolution decomposition to detect changes across multiple temporal scales and incorporates user feedback to iteratively optimize key hyperparameters. This interaction enables the model to align its notion of change with that of the user, improving both accuracy and interpretability. Our experimental results on several real-world datasets show the effectiveness of MuRAL-CPD against state-of-the-art methods, particularly in scenarios where minimal supervision is available.
\end{abstract}

\section{Introduction}\label{sec:intro}
Over the past decade, the significant increase in the production and utilization of low-cost sensors has revolutionized industries ranging from wearable devices and autonomous vehicles to medical monitoring systems. These sensors continuously generate vast amounts of temporally ordered data, commonly referred to as time series. Despite this abundance, a considerable portion of the data remains unlabeled, posing a critical challenge for machine learning methods that depend on high-quality annotated datasets.

The growing power of machine learning models has intensified the demand for robust preprocessing tools that can effectively structure and label time series data. The first step to obtain annotated time series is identifying moments when the statistical properties of the data shift. To this aim, Change Point Detection (CPD)~\cite{surveyCPD} has emerged as a key technique for segmenting time series based on such distributional changes. These change points often correspond to meaningful transitions in the underlying process, such as changes in human activity, physiological states, or system behavior.

Supervised CPD approaches, while powerful, are often impractical due to their reliance on extensive annotations and their limited ability to generalize across heterogeneous datasets. Conversely, unsupervised methods are more widely adopted but lack the contextual insight provided by user expertise, often leading to results that are misaligned with real-world needs. To overcome these limitations, the Active Learning (AL) framework has emerged as a promising strategy. By iteratively engaging the user to label the most informative samples, AL enables models to adapt to task-specific requirements while minimizing the amount of labeled data required. This paradigm aligns with the broader concept of human-in-the-loop machine learning~\cite{surveyHITL}, where human feedback is leveraged to guide and improve algorithmic performance.

In this paper, we introduce MuRAL-CPD, a novel AL method for multiresolution CPD. Our approach builds upon a novel unsupervised backbone, MuR-CPD, which leverages multiresolution wavelet decomposition to detect changes across multiple temporal scales. By embedding active learning directly into the unsupervised pipeline, MuRAL-CPD optimizes key hyperparameters through minimal user feedback, ensuring faster convergence and consistent representations across learning phases. This unified design eliminates the need for separate models, reducing computational overhead and enabling a more efficient adaptation to user-defined notions of change.

We evaluate MuRAL-CPD on diverse real-world datasets, demonstrating its competitive performance compared to state-of-the-art methods. Our contributions are threefold:
\begin{itemize}
\item We propose a multiresolution CPD algorithm that combines wavelet-based decomposition with an efficient scoring mechanism to detect changes at varying temporal scales.
\item We integrate active learning into the unsupervised backbone, enabling adaptive refinement of model predictions based on user feedback.
\item We validate our approach through extensive experiments, showing that our method outperforms state-of-the-art competitors.
\end{itemize}

The paper is organized as follows. Section \ref{sec:problem_formulation} formalizes the CPD problem and discusses its challenges and Section \ref{sec:related} reviews related literature. Section \ref{sec:proposed} presents the proposed MuRAL-CPD method, detailing its multiresolution feature extraction, scoring mechanism, and active learning loop. Section \ref{sec:experiments} describes the experimental setup and results, and Section \ref{sec:conclusions} presents the conclusions.

\section{Problem Formulation}\label{sec:problem_formulation}
Time series can be modeled as finite observations of real-world processes evolving over time. To formalize this notion, let $(\Omega, \mathcal{F}, \mathbb{P})$ be a probability space, and let $\{X_i\}_{i \in \mathcal{T}}$, with $\mathcal{T} \subset \mathbb{N}$, denote a discrete-time stochastic process taking values in a measurable space $\mathcal{M}$.

A time series is then defined as a finite realization of this process over a finite index set $\mathcal{T}_0 = \{1, \dots, n\} \subset \mathcal{T}$, for a fixed but unknown outcome $\omega_0 \in \Omega$. Assuming $\mathcal{M} = \mathbb{R}^d$, we write the time series as $x \in \mathbb{R}^{d \times n}$,
with $x[i] := X_i(\omega_0) \in \mathbb{R}^d$.

The goal of CPD is to identify time indices at which the statistical properties of the process change. Specifically, we aim to infer a set of $N>0$, $N$ unknown, change points
\begin{equation*}
    \{\tau_1, \dots, \tau_N\} \subset \mathcal{T}_0
\end{equation*}
such that the distribution of the process changes at each $\tau_j$, i.e.
\begin{equation*}
    X_i \sim
    \begin{cases}
        \phi_j & \text{for } t < \tau_j, \\
        \phi_{j+1} & \text{for } t \geq \tau_j,
    \end{cases}
\end{equation*}
with $\phi_j \neq \phi_{j+1}$ being two unknown distributions.

These changes may reflect shifts in the underlying generative mechanism, such as transitions between different regimes or activities. The CPD task is to detect these transition points solely based on the observed time series $x$, without direct knowledge of the distributions. While traditional CPD methods aim to infer change points purely from the observed time series $x$, in many real-world scenarios the notion of a meaningful change is context-dependent and cannot be fully captured by unsupervised criteria alone.

To address this issue, we consider a semi-supervised variant of the CPD problem, where a user can interactively provide labels--on a limited number of time intervals, in the form of binary annotations--0 for non-CP and 1 for CP. Specifically, let $\mathcal{D}_S$ denote a set of user-provided annotations. These annotations are acquired iteratively via an AL loop. The goal then becomes to infer a set of change points $\{\tau_1, \dots, \tau_N\}$ that not only fit the observed time series $x$, but also align with the information provided in $\mathcal{D}_S$.

\section{Related Works}\label{sec:related}
Unsupervised methods for CPD have been extensively explored due to their generality and independence from labeled data~\cite{review_unsupervisedCPD}. A common class relies on sliding windows to compare adjacent segments. For instance, the \textit{Generalized Likelihood Ratio} (GLR)~\cite{GLR} fits parametric models and tests for distributional shifts via likelihood ratios. Non-parametric alternatives include \textit{RuLSIF}~\cite{RuLSIF}, which estimates the relative Pearson divergence, and \textit{KLIEP}~\cite{KLEIP}, which instead uses the KL divergence, both via direct density ratio estimation. Other notable approaches include Bayesian models~\cite{barry1993bayesian}, which assume a piecewise-constant structure with Markov priors, and symbolic methods like \textit{FLOSS}~\cite{floss} and \textit{CLASP}~\cite{clasp}, based on matrix profiles and motif dissimilarities, respectively.

More recently, \textit{TIRE}~\cite{TIRE} proposed an autoencoder-based model that learns time-invariant representations of fixed-length windows. Change points are then detected by computing dissimilarities between latent embeddings. While effective, TIRE requires careful tuning and suffers from limited interpretability.

Despite their generality, unsupervised methods suffer from several limitations: they often produce spurious detections, lack adaptability to task-specific definitions of change, and are unable to incorporate user feedback. These drawbacks are particularly problematic in applications where only certain types of changes are relevant or where expert knowledge is available but limited.

In recent years, semi-supervised approaches have emerged to address the limitations of unsupervised CPD by incorporating user feedback into the detection process. \textit{AL-CPD}~\cite{AL-CPD} formulates CPD as a binary classification problem, combining an unsupervised detector (TIRE) with a supervised classifier trained on pseudo-labels and refined through active learning. However, the use of two independent models introduces conflicting objectives and makes the approach sensitive to the imbalance of the two classes--CP and non-CP.

To overcome these issues, \textit{ICPD}~\cite{ICPD} was proposed as a unified framework based on one-class classification. It treats CPD as an outlier detection problem and integrates active learning into the training of a one-class SVM. By querying the most uncertain predictions and iteratively retraining the model, ICPD adapts to user preferences while remaining robust to label scarcity. Empirical results show consistent improvements over both its initialization models and AL-CPD across several benchmarks.

Nevertheless, a fundamental limitation of ICPD lies in the architectural separation between the unsupervised model used for initialization and the one-class classifier employed during the active learning phase. This disconnection forces the supervised model to relearn representations already encoded by the detector, resulting in slower convergence and unnecessary redundancy.

In contrast, our method integrates active learning directly within the unsupervised backbone by formulating the learning loop as a hyperparameter optimization problem. Rather than introducing a new model, we refine the parameters governing the score computation itself. This design ensures faster convergence, consistent representation across learning phases, and avoids the computational overhead of maintaining separate models.

\section{Proposed Solution}\label{sec:proposed}

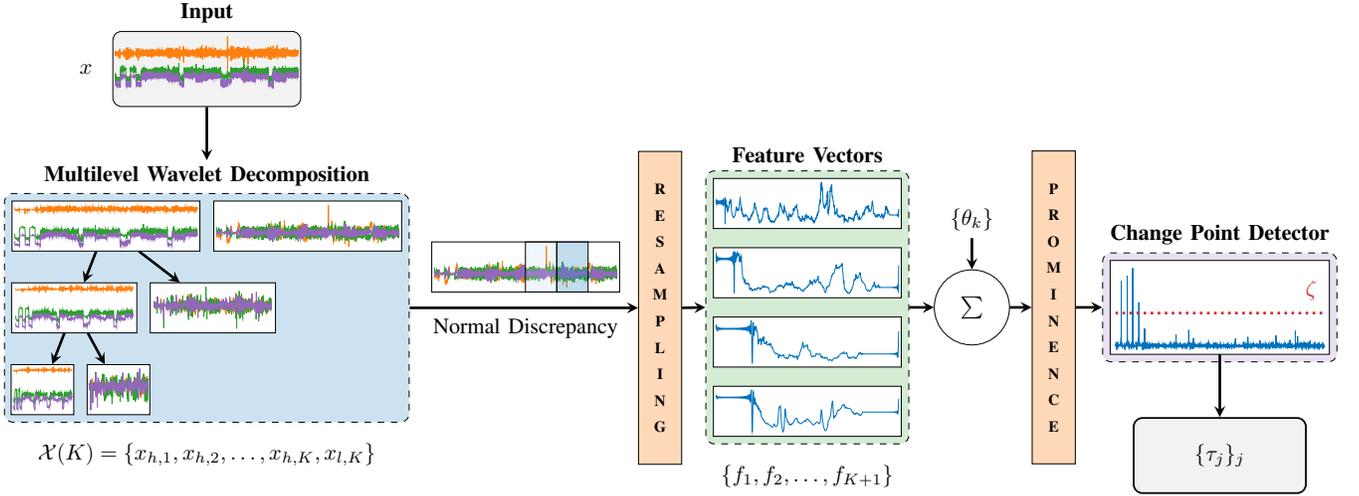
\begin{figure*}[t]
    \centering
    \resizebox{\textwidth}{!}{
    \pgfdeclarelayer{background}
    \pgfsetlayers{background,main}
    \begin{tikzpicture}
        [grid/.style={very thin,gray},
        startstop/.style={draw=black, fill=gray!10, rounded corners, minimum width=2.75cm, minimum height=1.2cm},
        narrow/.style={draw=black, very thin, fill=white, minimum width=3cm, minimum height=0.8cm},
        process/.style={draw=black, very thin, fill=white, font=\small, minimum width=3.5cm, minimum height=1.5cm},
        arrow/.style={very thick,->,>=stealth},
        group/.style={draw=black, dashed, rounded corners, fill=orange!10, fill opacity=0.2},
        ]

        \node[startstop, minimum width = 3cm] (input) at (0,0) {};
        \node[font=\bfseries, above=0.01cm of input.north] {Input};
        \begin{scope}[]
            \begin{axis}[
                at={(input.north)},
                anchor=north,
                width=3.5cm,
                height=1cm,
                axis lines=none,
                scale only axis=true,
                xtick=\empty,
                ytick=\empty,
                clip=true
            ]
            \addplot[orange, very thin] table[x expr=(\coordindex+1), y index=0] {data/samples.txt};
            \addplot[green, very thin] table[x expr=(\coordindex+1), y index=1] {data/samples.txt};
            \addplot[purple, very thin] table[x expr=(\coordindex+1), y index=2] {data/samples.txt};
            \end{axis}
        \end{scope}

        \node[left=0.2cm of input.west] {$x$};

        \node[narrow, below left=1.5cm and -1.4cm of input] (cA1) {};
        \begin{scope}[]
            \begin{axis}[
                at={(cA1.north)},
                anchor=north,
                width=3.5cm,
                height=0.8cm,
                axis lines=none,
                scale only axis=true,
                xtick=\empty,
                ytick=\empty,
                clip=true
            ]
            \addplot[orange, very thin] table[x expr=(\coordindex+1), y index=0] {data/cA1.txt};
            \addplot[green, very thin] table[x expr=(\coordindex+1), y index=1] {data/cA1.txt};
            \addplot[purple, very thin] table[x expr=(\coordindex+1), y index=2] {data/cA1.txt};
            \end{axis}
        \end{scope}
        \node[narrow, below right=1.5cm and -1.4cm of input] (cD1) {};
        \begin{scope}[]
            \begin{axis}[
                at={(cD1.north)},
                anchor=north,
                width=3.5cm,
                height=0.8cm,
                axis lines=none,
                scale only axis=true,
                xtick=\empty,
                ytick=\empty,
                clip=true
            ]
            \addplot[orange, very thin] table[x expr=(\coordindex+1), y index=0] {data/cD1.txt};
            \addplot[green, very thin] table[x expr=(\coordindex+1), y index=1] {data/cD1.txt};
            \addplot[purple, very thin] table[x expr=(\coordindex+1), y index=2] {data/cD1.txt};
            \end{axis}
        \end{scope}

        \node[narrow, below left=0.5cm and -2cm of cA1, minimum width=2cm] (cA2) {};
        \begin{scope}[]
            \begin{axis}[
                at={(cA2.north)},
                anchor=north,
                width=2.25cm,
                height=0.8cm,
                axis lines=none,
                scale only axis=true,
                xtick=\empty,
                ytick=\empty,
                clip=true
            ]
            \addplot[orange, very thin] table[x expr=(\coordindex+1), y index=0] {data/cA2.txt};
            \addplot[green, very thin] table[x expr=(\coordindex+1), y index=1] {data/cA2.txt};
            \addplot[purple, very thin] table[x expr=(\coordindex+1), y index=2] {data/cA2.txt};
            \end{axis}
        \end{scope}
        \node[narrow, below right=0.5cm and -0.8cm of cA1, minimum width=2cm] (cD2) {};
        \begin{scope}[]
            \begin{axis}[
                at={(cD2.north)},
                anchor=north,
                width=2.25cm,
                height=0.8cm,
                axis lines=none,
                scale only axis=true,
                xtick=\empty,
                ytick=\empty,
                clip=true
            ]
            \addplot[orange, very thin] table[x expr=(\coordindex+1), y index=0] {data/cD2.txt};
            \addplot[green, very thin] table[x expr=(\coordindex+1), y index=1] {data/cD2.txt};
            \addplot[purple, very thin] table[x expr=(\coordindex+1), y index=2] {data/cD2.txt};
            \end{axis}
        \end{scope}

        \draw[arrow] (cA1) -- (cA2);
        \draw[arrow] (cA1) -- (cD2);

        \node[narrow, below left=0.5cm and -1cm of cA2, minimum width=1cm] (cAK) {};
        \begin{scope}[]
            \begin{axis}[
                at={(cAK.north)},
                anchor=north,
                width=1.125cm,
                height=0.8cm,
                axis lines=none,
                scale only axis=true,
                xtick=\empty,
                ytick=\empty,
                clip=true
            ]
            \addplot[orange, very thin] table[x expr=(\coordindex+1), y index=0] {data/cAK.txt};
            \addplot[green, very thin] table[x expr=(\coordindex+1), y index=1] {data/cAK.txt};
            \addplot[purple, very thin] table[x expr=(\coordindex+1), y index=2] {data/cAK.txt};
            \end{axis}
        \end{scope}
        \node[narrow, below right=0.5cm and -0.8cm of cA2, minimum width=1cm] (cDK) {};
        \begin{scope}[]
            \begin{axis}[
                at={(cDK.north)},
                anchor=north,
                width=1.125cm,
                height=0.8cm,
                axis lines=none,
                scale only axis=true,
                xtick=\empty,
                ytick=\empty,
                clip=true
            ]
            \addplot[orange, very thin] table[x expr=(\coordindex+1), y index=0] {data/cDK.txt};
            \addplot[green, very thin] table[x expr=(\coordindex+1), y index=1] {data/cDK.txt};
            \addplot[purple, very thin] table[x expr=(\coordindex+1), y index=2] {data/cDK.txt};
            \end{axis}
        \end{scope}

        \draw[arrow] (cA2) -- (cAK);
        \draw[arrow] (cA2) -- (cDK);

        \begin{pgfonlayer}{background}
            \node[group, fit=(cA1)(cD1)(cDK), label=above:{\textbf{Multilevel Wavelet Decomposition}}, fill=blue]  (MDWD_group) {};
        \end{pgfonlayer}
        
        \node at ($(MDWD_group.north) + (0,0.4cm)$) (MDWD_group_anchor) {};
        \draw[arrow] (input.south) -- (MDWD_group_anchor);

        \node[below=0.2cm of MDWD_group.south] {$\calX(K) = \{x_{h,1}, x_{h,2}, \dots, x_{h,K}, x_{l,K}\}$};

        \node[draw=black, fill=orange!30, minimum width=0.1cm, minimum height=5cm, inner sep=0.5pt] (resample) at ($(MDWD_group.east) + (4, 0)$) {
        \begin{tabular}{c}
            {\scriptsize \textbf{R}} \\
            {\scriptsize \textbf{E}} \\
            {\scriptsize \textbf{S}} \\
            {\scriptsize \textbf{A}} \\
            {\scriptsize \textbf{M}} \\
            {\scriptsize \textbf{P}} \\
            {\scriptsize \textbf{L}} \\
            {\scriptsize \textbf{I}} \\
            {\scriptsize \textbf{N}} \\
            {\scriptsize \textbf{G}}
        \end{tabular}
    };
        \draw[arrow] (MDWD_group.east) -- (resample.west);
        
        \node[narrow] (ns1) at ($(resample.east) + (2, 1.668)$)  {};
        \begin{scope}[]
            \begin{axis}[
                at={(ns1.north)},
                anchor=north,
                width=3.5cm,
                height=0.8cm,
                axis lines=none,
                scale only axis=true,
                xtick=\empty,
                ytick=\empty,
                clip=true
            ]
            \addplot[blue, very thin] table[x expr=(\coordindex+1), y index=0] {data/f0.txt};
            \end{axis}
        \end{scope}
        
        \node[narrow, below=0.3cm of ns1] (ns2) {};
        \begin{scope}[]
            \begin{axis}[
                at={(ns2.north)},
                anchor=north,
                width=3.5cm,
                height=0.8cm,
                axis lines=none,
                scale only axis=true,
                xtick=\empty,
                ytick=\empty,
                clip=true
            ]
            \addplot[blue, very thin] table[x expr=(\coordindex+1), y index=0] {data/f1.txt};
            \end{axis}
        \end{scope}

        \node[narrow, below=0.3cm of ns2] (nsK) {};
        \begin{scope}[]
            \begin{axis}[
                at={(nsK.north)},
                anchor=north,
                width=3.5cm,
                height=0.8cm,
                axis lines=none,
                scale only axis=true,
                xtick=\empty,
                ytick=\empty,
                clip=true
            ]
            \addplot[blue, very thin] table[x expr=(\coordindex+1), y index=0] {data/f2.txt};
            \end{axis}
        \end{scope}

        \node[narrow, below=0.3cm of nsK] (nsK1) {};
        \begin{scope}[]
            \begin{axis}[
                at={(nsK1.north)},
                anchor=north,
                width=3.5cm,
                height=0.8cm,
                axis lines=none,
                scale only axis=true,
                xtick=\empty,
                ytick=\empty,
                clip=true
            ]
            \addplot[blue, very thin] table[x expr=(\coordindex+1), y index=0] {data/f3.txt};
            \end{axis}
        \end{scope}

        \begin{pgfonlayer}{background}
            \node[group, fit=(ns1)(ns2)(nsK)(nsK1), label=above:{\textbf{Feature Vectors}}, fill=green]  (feature_group) {};
        \end{pgfonlayer}

        \node[below=0.2cm of feature_group.south] {$\{f_1, f_2, \dots, f_{K+1}\}$};

        \node[narrow, above right=0.25cm and 0.35cm of MDWD_group.east] (windowing) {};
        \begin{scope}[]
            \begin{axis}[
                at={(windowing.north)},
                anchor=north,
                width=3.5cm,
                height=0.8cm,
                axis lines=none,
                scale only axis=true,
                xtick=\empty,
                ytick=\empty,
                clip=true
            ]
            \addplot[orange, very thin] table[x expr=(\coordindex+1), y index=0] {data/cD1.txt};
            \addplot[green, very thin] table[x expr=(\coordindex+1), y index=1] {data/cD1.txt};
            \addplot[purple, very thin] table[x expr=(\coordindex+1), y index=2] {data/cD1.txt};
            \end{axis}
        \end{scope}

        \node at ($(windowing.west) + (1.5, 0.4)$) (west_angle) {};
        \node at ($(windowing.west) + (2, -0.4)$) (east_angle) {};
        \node at ($(windowing.west) + (2.5, 0.4)$) (west_angle2) {};
        \draw[fill=blue!30, fill opacity=0.2, thin] (west_angle) rectangle (east_angle);
        \draw[fill=blue!80, fill opacity=0.3, thin] (west_angle2) rectangle (east_angle);
        \node[below=0.3cm of windowing.south] {Normal Discrepancy};

        \draw[arrow] (resample.east) -- (feature_group.west);

        \node[draw=black, circle, minimum size=1.2cm, fill=white] (fusion) at ($(feature_group.east) + (1, 0)$) {$\sum$};

        \node[above=0.5cm of fusion.north] (theta) {$\{\theta_k\}$};
        \draw[arrow] (theta.south) -- (fusion.north);

        \draw[arrow] (feature_group.east) -- (fusion.west);

        \node[draw=black, fill=orange!30, minimum width=0.1cm, minimum height=5cm, inner sep=0.5pt] (prominence) at ($(fusion.east) + (0.7, 0)$) {\begin{tabular}{c}
            {\scriptsize \textbf{P}} \\
            {\scriptsize \textbf{R}} \\
            {\scriptsize \textbf{O}} \\
            {\scriptsize \textbf{M}} \\
            {\scriptsize \textbf{I}} \\
            {\scriptsize \textbf{N}} \\
            {\scriptsize \textbf{E}} \\
            {\scriptsize \textbf{N}} \\
            {\scriptsize \textbf{C}} \\
            {\scriptsize \textbf{E}}
        \end{tabular}
    };

        \draw[arrow] (fusion.east) -- (prominence.west);

        \node[process] (cpd) at ($(prominence.east) + (2.3, 0)$) {};
        
        \begin{scope}[]
            \begin{axis}[
                at={(cpd.north)},
                anchor=north,
                width=4cm,
                height=1.5cm,
                axis lines=none,
                scale only axis=true,
                xtick=\empty,
                ytick=\empty,
                clip=true
            ]
            \addplot[blue, very thin] table[x expr=(\coordindex+1), y index=0] {data/prominence.txt};
            \addplot[red, very thick, dotted] coordinates {(0,0.1) (2000,0.1)};
            \end{axis}
        \end{scope}

        \begin{pgfonlayer}{background}
            \node[group, fit=(cpd), label=above:{\textbf{Change Point Detector}}, fill=purple]  (cpd_group) {};
        \end{pgfonlayer}
        \draw[arrow] (prominence.east) -- (cpd_group.west);

        \node[startstop, below=1cm of cpd, thick] (output) {$\{\tau_j\}_j$};
        \draw[arrow] (cpd.south) -- (output.north);

        \node[above left=0cm and 0.2cm of cpd_group.east] {\textcolor{red}{$\zeta$}};        
    \end{tikzpicture} 
    }
    \caption{Overview of the MuR-CPD algorithm. The input time series undergoes a Multilevel Wavelet Decomposition to extract multiresolution subbands. We compute discrepancy scores on each subband using a sliding window approach, followed by resampling to obtain feature vectors aligned with the original time index. Then we aggregate the features by a linear combination and apply the prominence function $\pi$ to highlight peaks, and change points are detected by applying a tunable decision threshold.}
    \label{fig:overview}
\end{figure*}   

In this section, we present MuRAL-CPD, a novel AL method for CPD. MuRAL-CPD integrates MuR-CPD, a novel unsupervised multiresolution CPD algorithm, with an AL strategy designed to optimize the parameters of MuR-CPD through minimal user feedback.

As illustrated in Figure~\ref{fig:overview}, MuR-CPD operates in several stages.
First, we produce a hierarchical decomposition of the time series by applying a multiresolution wavelet transform to each of its channels. The resulting sub-bands capture distinct temporal and frequency patterns that emerge at different scales. This hierarchical decomposition enables the algorithm to locate changes across multiple resolutions, thus detecting both abrupt and subtle CP.

Then, for each extracted sub-band we compute a discrepancy score, which quantifies the likelihood of a change occurring at each time instant. This is done by evaluating the discrepancy between adjacent segments using the sliding window approach proposed in~\cite{normal_discrepancy_score}. To reconcile the decimation introduced by the wavelet transform and maintain temporal alignment among these discrepancy scores, we resample them to match the original time interval. We refer to the resampled outputs as \emph{features}, each indicating the likelihood--assessed at a different scale--of a change occurring at each time instant.

Finally, we aggregate these features into a final score through a linear combination, we highlight its peaks through a prominence function, and we detect CP by thresholding these peaks.
 
While MuR-CPD is a fast and effective unsupervised method, MuRAL-CPD further improves the algorithm accuracy by incorporating active learning. In MuRAL-CPD, user-provided feedback is used to fine-tune critical hyperparameters of the unsupervised pipeline, thus enhancing the robustness and precision of the algorithm.

\subsection{Feature Extraction}\label{subsec:feature_extraction}
In this section, we describe the process by which we extract informative features from the input time series. This procedure progressively transforms the raw signal into a multiresolution representation, and subsequently into a form that quantifies local statistical changes over time.

To extract a multiresolution representation of the signal, we consider the Multilevel Discrete Wavelet Decomposition (MDWD)~\cite{wavelet}. MDWD is a wavelet-based discrete signal analysis technique that extracts time-frequency features by iteratively decomposing the input series into lower- and higher-frequency sub-bands. For each resolution level, a pair of low-pass and high-pass filters $l$ and $h$ are applied to isolate the approximation and detail components of the signal, respectively.

Formally, let $x$ be the original time series. We denote by $x_{l,k}$ and $x_{h,k}$ the approximation and detail sub-bands obtained at level $k$. These are computed by convolution of the filters $l$ and $h$ against the previous detail components $x_{l,k-1}$:
\begin{align}\label{eq:conv}
x_{l,k} = l \circledast x_{l,k-1} \\
x_{h,k} = h \circledast x_{l,k-1}
\end{align}
where the output of each convolution is downsampled by a factor of $2$ and $x_{l,0} := x$ is the original signal. The operation is repeated recursively on $x_{l,k}$ to obtain the next level of decomposition. The sub-bands set $\calX(K) ~=~ \{x_{h,1}, x_{h,2}, \dots, x_{h,K}, x_{l,K}\}$ is the MDWD of the time series $x$ into $K$ levels.

To detect the local distributional shifts, we adopt a sliding window approach where we compute a discrepancy measure between two adjacent segments. 
In particular, let $\bar{x} \in \mathcal{X}(K)$ be a sub-bands obtained from the MDWD, and let $w > 0$ be the window size. At each time instant $i$, we define the $i$-left, $i$-right and $i$-centered windows as:
\begin{equation}
    \begin{aligned}
        &W_i^L(\bar{x}) := \bar{x}[i-w+1 : i], \\
        &W_i^R(\bar{x}) := \bar{x}[i+1 : i+w], \\
        &W_i(\bar{x})   := \bar{x}[i-w+1 : i+w].
    \end{aligned}
\end{equation}
Then, we measure the \emph{Normal Discrepancy}~\cite{normal_discrepancy_score} at the instant $i$ as:
\begin{equation}
    \Tilde{f}_k[i] := w \ln \left( \frac{\det \Sigma}{\sqrt{\det \Sigma_L \cdot \det \Sigma_R}} \right),
\end{equation}
where $\Sigma$, $\Sigma_L$, and $\Sigma_R$ denote the sample covariance matrices of $W_i(x_{l,k})$, $W_i^L(x_{l,k})$, and $W_i^R(x_{l,k})$, respectively. This measure captures local discrepancies in both the mean and variance of the signal and remains valid even in the presence of temporal dependencies, extending beyond the i.i.d.~setting.

Importantly, since MDWD produces a set of sub-bands at different time resolutions, using the same window size $w$ across levels corresponds to evaluating statistical changes over increasingly larger time scales. As a result, the sliding window covers a larger portion of the original signal at coarser levels, thus allowing our method to detect CP that occur at different temporal scales.

Finally, to realign all discrepancy vectors in a common temporal domain, we apply a resampling step that interpolates each $\Tilde{f}_k$ back to the original time interval of $x$, resulting in a feature vector $f_k$. This ensures that all multiscale profiles are temporally aligned and can be consistently combined in the subsequent stages of the model.
The set of features $\{f_k\}_k, k=1, \dots K+1$,  provide a multi-resolution representation of the time series in the frequency domain.

\subsection{Change Point Detection}\label{subsec:change_detection}
To compute an aggregated score vector indicating the likelihood of each sample being a CP, we linearly combine the sub-band feature representations $\{f_k\}_{k=1}^{K+1}$ and apply a prominence transformation $\pi$ to emphasize significant peaks:
\begin{equation}
    s[i] = \pi \left(\sum_{k=1}^{K+1} \theta_k f_k[i]\right),
\end{equation}
where $\{\theta_k\}_{k=1}^{K+1}$ are tunable parameters controlling the contribution of each resolution level.

The prominence transformation $\pi$ assigns a prominence value to each local maximum, quantifying how much it stands out relative to its surroundings. Formally, the prominence of a peak at index $i$ is defined as:
\begin{equation}
    \pi(f)[i] = f[i] - \max\left( \min_{j \in [i_-, i]}f[j], \min_{j \in [i, i_+]}f[j] \right),
\end{equation}
where $(i_-, i_+)$ is the largest interval containing $i$ such that $f[i] > f[j]$ for all $j \in (i_-, i_+)$ and
\begin{equation}
    f = \sum_{k=1}^{K+1} \theta_k f_k.
\end{equation}
For all non-peak points, the prominence is defined to be $0$.

Finally, to perform change point detection, we introduce a learnable threshold parameter $\zeta > 0$. Time instants $\tau \in \calT_0$ whose score $s[\tau]$ exceeds the threshold $\zeta$ are classified as change points.

\subsection{AL for Hyperparameters Tuning}
The algorithm described so far provides a solid foundation for unsupervised CPD. To further enhance its performance and adaptability, we integrate an AL phase that enables the user to iteratively correct and personalize model predictions.
This is achieved by integrating user feedback as annotated data to optimize a set of hyperparameters that influence the resulting score vector--namely, the feature weights $\{\theta_k\}_k$ and the detection threshold $\zeta$--thus influencing the detection outcome.

The weights $\{\theta_k\}_k$ control the importance of each resolution level within the embedding space. Increasing the weight of a specific feature relative to the others encourages the model to focus more on the corresponding sub-band, thereby refining its sensitivity to structural changes at that scale. The decision threshold $\zeta$ determines the minimum score required for a point to be classified as a change point, effectively governing the trade-off between precision and recall.
By incorporating user feedback into the optimization loop, the method adapts to task-specific definitions of change, yielding a more accurate detection process. The full procedure is presented below and detailed in Algorithm~\ref{alg:active_learning}.

The active learning loop starts by initializing the supervised and unsupervised domains (line~\ref{alg:initialization}). At each iteration (line~\ref{alg:loopstart}), the algorithm identifies the most uncertain candidates by selecting the scores closest to the current threshold $\zeta$ from above and below (lines~\ref{alg:iplus}-\ref{alg:iminus}):
\begin{equation}
i^+ = \argmin_{i \in \calD_U, \, s[i] \geq \zeta} |s[i] - \zeta|, \quad
i^- = \argmin_{i \in \calD_U, \, s[i] < \zeta} |s[i] - \zeta|.
\end{equation}
For each selected index (line~\ref{alg:forallidx}), we define a local window $W_i = [i - p, i + p]$ and ask the user to annotate whether change points are present within it (lines~\ref{alg:window}-\ref{alg:userfb}). The labeled samples are then added to the supervised set $\mathcal{D}_S$, while the queried region is removed from $\mathcal{D}_U$ (lines~\ref{alg:supervised}-\ref{alg:unsupervised}).

The updated supervised set is used to optimize the parameters $(\{\theta_k\}, \zeta)$ (line~\ref{alg:optimization}), minimizing the loss:
\begin{equation}
    \mathcal{L}(\{\theta_k\}, \zeta) = 1 - F_1(\mathcal{D}_S),
\end{equation}
where the $F_1$-score is computed on the user-labeled points. The new parameters are then used to recompute the score sequence (line~\ref{alg:newscore}), and the process is repeated until the supervision budget $b$ is exhausted.

This iterative loop (lines~\ref{alg:loopstart}-\ref{alg:loopend}) progressively adapts the model to the user's notion of relevant change, enabling a personalized and data-efficient refinement of the initial unsupervised predictions.

\begin{algorithm}[t]
\caption{Active Learning Loop}
\begin{algorithmic}[1]
\Require Time series $x$, initial scores $s$, initial parameters $(\{\theta_k\}, \zeta)$, budget $b$, tolerance $\eta$
\State Initialize supervised domain $\calD_S \gets \emptyset$ and unsupervised domain $\calD_U=\calT_0$ \label{alg:initialization}
\For{$m = 1$ to $b$} \label{alg:loopstart}
    \State Compute $i^+ \gets \argmin_{i \in \calD_U,\, s[i] \geq \tau} |s[i] - \tau|$ \label{alg:iplus}
    \State Compute $i^- \gets \argmin_{i \in \calD_U,\, s[i] < \tau} |s[i]
    - \tau|$ \label{alg:iminus}
    \ForAll{$i \in \{i^+, i^-\}$} \label{alg:forallidx}
        \State $W_i = [i - p, i + p]$ \label{alg:window}
        \State Query the user for true change points $\{\tau_j\}_j \!\subset\! W_i$ \label{alg:userfb}
        \State $\calD_S \gets \calD_S \cup \{\tau_j\}_j$ \label{alg:supervised}
        \State $\calD_U \gets \calD_U \setminus W_i$ \label{alg:unsupervised}
    \EndFor
    \State ${(\{\theta_k\}, \zeta) = \argmin{\mathcal{L}(\{\theta_k\}, \zeta)}}$ \label{alg:optimization}
    \State Obtain the new score $s = \pi \left(\sum_{k=1}^{K+1} \theta_k f_k\right)$ \label{alg:newscore}
\EndFor \label{alg:loopend}
\State \Return Final parameters $(\{\theta_k\}, \zeta)$ \label{alg:final}
\end{algorithmic}
\label{alg:active_learning}
\end{algorithm}
\subsection{Hyperparameter Initialization}
\label{sec:hyper_init}
\begin{figure}[t]
  \centering
  \begin{tikzpicture}
    \begin{axis}[
      width=\columnwidth,
      height=0.6\columnwidth,
      title={},
      xlabel={$t$},
      ylabel={$\gamma(t)$},
      xmin=0, xmax=1,
      ymin=0, ymax=2.5,
      xtick distance=0.2,
      ytick distance=0.5,
    ]
      \addplot[
        blue,
        very thick,
        mark=none
      ] coordinates {
        (0.00, 2.50)
        (0.02, 2.00)
        (0.04, 1.70)
        (0.06, 1.40)
        (0.08, 1.10)
        (0.10, 0.90)
        (0.12, 0.85) 
        (0.14, 0.80)
        (0.16, 0.78)
        (0.18, 0.75)
        (0.20, 0.70)  
        (0.24, 0.68)
        (0.28, 0.65)
        (0.32, 0.60)
        (0.36, 0.58)
        (0.40, 0.55)
        (0.50, 0.50)
        (0.60, 0.45)
        (0.70, 0.40)
        (0.80, 0.30)
        (0.90, 0.20)
        (1.00, 0.10)
      };

      \addplot[
        only marks,
        mark=*,
        mark size=3pt,
        draw=black,
        fill=yellow
      ] coordinates { (0.10, 0.90) };

    \draw[dashed,gray] (axis cs:0.10, 0.90) -- (axis cs:0.10,0);
    \draw[dashed,gray] (axis cs:0.10, 0.90) -- (axis cs:0,0.90);

    \node[label=below:$t^*$] 
        at (axis cs:0.10,0) {};
    \node[label=left:$\zeta_0$] 
        at (axis cs:0,0.90) {};
    
    \end{axis}
  \end{tikzpicture}
  \caption{The sorted score function $\gamma(t)$ plotted against the normalized index $t\in[0,1]$. The piecewise linear curve exhibits a steep initial drop followed by a gradual tail. The elbow point at $t\approx0.10$ (yellow circle) corresponds to the maximum curvature of $\gamma(t)$. Dashed lines project this point horizontally to $\zeta_0\approx0.90$ on the $\gamma$‐axis and vertically to $t$ on the $t$‐axis; $\zeta_0$ is selected as the initial decision threshold.}
  \label{fig:curvature}
\end{figure}
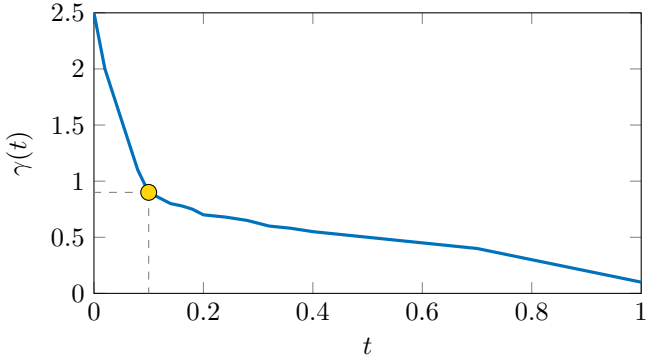
The proposed method involves the optimization of both the feature weights and the decision threshold, which jointly determine the model's predictions. While the weights are uniformly initialized, the initialization of the threshold plays a far more crucial role, strongly influencing the early behavior of the AL-module and affecting both the convergence speed of the entire algorithm and the quality of the queries themselves.

For this reason, a more sophisticated mathematical strategy has been employed.
Since we expect change points to be characterized by significantly larger scores than other points, we initialize the threshold $\zeta$ to maximize the difference between detected and undetected points. Inspired by clustering problems, we cast the threshold initialization as the problem of identifying the most abrupt change in the sorted score profile which ideally represent the best separation between change points and non change points.

First, we compute the scores $s$ associated with the initialized weights, and we normalize them to $[0,1]$.
Then, we sort the scores in decreasing order, resulting in a sequence $i_m, m=1,\dots,n$ such that $s[i_{m'}] > s[i_{m''}]$ for $m'<m''$.
By mapping each time index $i_m \mapsto \frac{m-1}{n-1}$, we construct the discrete curve $\gamma$ in the 2D space $[0, 1]^2$ as the piecewise linear parametrized curve:
\begin{equation}
    \gamma: t \mapsto s(t),
\end{equation}
where $\gamma(\frac{m-1}{n-1}) = s[i_m]$ for every $m=1,\dots,m$.

This curve exhibits a steep initial descent followed by a flattening region, as illustrated in Figure~\ref{fig:curvature}. To determine an initial threshold that separates meaningful scores from negligible ones we identify the point where the curve presents the sharpest change in slope, commonly known as \emph{elbow}--corresponding to the maximum of its \textit{curvature}. Intuitively, the curvature describes for any part of the curve how much the curve direction changes over a small distance traveled. Formally, consider the above regular planar curve $\gamma(t) = (t, s(t)) \in [0,1]^2$. The curvature of a point $t \in [0,1]$ can be computed as:
\begin{equation}
    \kappa(t) := \frac{|s^{\prime \prime}(t)|}{\left (1 + s^\prime(t)^2 \right)^{3/2}}
\end{equation}
where $s^{\prime}(t)$ and $s^{\prime \prime}(t)$ denote the first an the second derivatives of the curve. In the discrete setting, these derivatives are numerically approximated using central differences.

The maximum curvature point $t^* = \arg\max_t\kappa(t)$ corresponds to the hardest inflection in the curve--where the concavity changes most rapidly. This inflection corresponds to a shift in the nature of the points in its neighborhood. For this reason, we associated the score of this point with the value of our initial threshold $\zeta_0$.

This approach provides a non-parametric and sophisticated initialization method based solely on the geometry of the score function, which significantly improves model's performance in the early phase of the AL loop. However, despite its impact, Proposition~\ref{prop:1} shows that the final detection result is invariant under a joint rescaling of both the threshold and the feature weights. In particular, we prove that for any scaling of the threshold, one can construct a corresponding rescaling of the weights that yields the same set of detected change points. 

\begin{proposition}\label{prop:1}
Let $\{\theta_k\}_{k=1}^{K+1} \in \mathbb{R}^{K+1}_{\geq 0}$ be a set of non-negative weights and let $\zeta > 0$ be a decision threshold. Define the aggregated score as
\begin{equation*}
    s[i] = \pi \left(\sum_{k=1}^{K+1} \theta_k \, f_k[i]\right),
\end{equation*}
where $\pi : \mathbb{R}^n \to \mathbb{R}^n$ denote the prominence transformation. The set of detected CP is defined as
\begin{equation*}
    \mathcal{C} := \left\{ i \in \mathcal{T}_0 \,\middle|\, s[i] \geq \zeta \right\}.
\end{equation*}

Then, for every $\Hat{\zeta} > 0$, there exists a set of weights $\{\Hat{\theta}_k\}_{k=1}^{K+1}$ such that the resulting set of detected CP remains unchanged, i.e.,
\begin{equation*}
    \Hat{\mathcal{C}} := \left\{ i \in \mathcal{T}_0 \,\middle|\, \pi\left( \sum_{k=1}^{K+1} \Hat{\theta}_k \, f_k\right)[i]  \geq \Hat{\zeta} \right\} = \mathcal{C}.
\end{equation*}
\end{proposition}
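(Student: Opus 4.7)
The plan is to exploit the positive homogeneity of the prominence transformation $\pi$, and then choose the rescaling factor precisely so that the threshold comparison is preserved. Concretely, I would set $\hat{\theta}_k := (\hat{\zeta}/\zeta)\,\theta_k$ for every $k=1,\dots,K+1$, and show that this choice forces $\hat{\mathcal{C}} = \mathcal{C}$.

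The first step is the key lemma: for any $c>0$ and any signal $g$, $\pi(c\,g) = c\,\pi(g)$ pointwise. This reduces to checking two things from the definition of $\pi$ given in Section~\ref{subsec:change_detection}. First, multiplying $g$ by a positive scalar does not change the set of strict local maxima, nor the maximal intervals $(i_-, i_+)$ used in the prominence formula, since these depend only on the order relation between the values of $g$, which $c>0$ preserves. Second, on any such peak $i$, both the value $g[i]$ and the quantities $\min_{j\in[i_-,i]} g[j]$ and $\min_{j\in[i,i_+]} g[j]$ scale by the same factor $c$, so
\begin{equation*}
\pi(c\,g)[i] = c\,g[i] - \max\bigl(c\min_{j\in[i_-,i]} g[j],\, c\min_{j\in[i,i_+]} g[j]\bigr) = c\,\pi(g)[i],
\end{equation*}
while at non-peak points both sides are $0$ by definition.

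The second step applies this lemma with $g = \sum_k \theta_k f_k$ and $c = \hat{\zeta}/\zeta > 0$. By linearity of the sum and the homogeneity of $\pi$,
\begin{equation*}
\pi\!\left(\sum_{k=1}^{K+1} \hat{\theta}_k\, f_k\right) = \pi\!\left(c\sum_{k=1}^{K+1} \theta_k\, f_k\right) = c\, s.
\end{equation*}
Hence $i \in \hat{\mathcal{C}}$ iff $c\,s[i] \geq \hat{\zeta} = c\,\zeta$, which, since $c>0$, is equivalent to $s[i]\geq \zeta$, i.e., $i\in\mathcal{C}$. This yields $\hat{\mathcal{C}}=\mathcal{C}$ and proves the proposition.

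The only genuinely delicate point is the homogeneity lemma, because $\pi$ is not a linear operator and its definition involves the auxiliary interval $(i_-, i_+)$ determined implicitly by the ordering of values of $g$. I would therefore spell out carefully that this interval is defined purely in terms of strict inequalities $g[i]>g[j]$, which are invariant under multiplication by a positive constant; everything else is then a routine scalar factoring. A minor bookkeeping issue is to confirm that the prominence value is nonnegative, so that the threshold inequality $\pi(\cdot)[i]\geq \zeta$ at non-peak points (where the value is $0$) is handled consistently for any positive $\zeta$ or $\hat{\zeta}$; this is immediate since $\zeta,\hat{\zeta}>0$ exclude non-peaks from both $\mathcal{C}$ and $\hat{\mathcal{C}}$.
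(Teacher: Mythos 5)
Your proposal is correct and follows essentially the same route as the paper: rescale the weights by $\lambda=\hat{\zeta}/\zeta$ and invoke positive homogeneity of the prominence transformation $\pi$ to conclude that the thresholding sets coincide. The only difference is that you spell out the homogeneity lemma (invariance of peaks and of the intervals $(i_-,i_+)$ under positive scaling), which the paper simply asserts as following from the definition of $\pi$.
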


\begin{proof}
From the definition of the prominence transformation (see \ref{subsec:change_detection}), it follows that $\pi$ is homogeneous, i.e. $\pi(\lambda s)=\lambda\pi(s)$. Let us define $\lambda = \Hat{\zeta}/\zeta$, and a the new set of weights ${\Hat{\theta}_k}$ as
\begin{equation}
    \Hat{\theta}_k = \lambda \theta_k, \quad k=1,\dots,K+1.
\end{equation}
If we consider the new aggregated feature vector
\begin{equation*}
    \hat{f}[i] := \sum_{k=1}^{K+1} \Hat{\theta}_k \, f_k[i] = \lambda f[i],
\end{equation*}
from the homogeneity of the prominence transformation, it follows that
\begin{equation*}
    \pi(\Hat{f})[i] \geq \Hat{\zeta} \iff \lambda \pi(f)[i] \geq \lambda \zeta \iff \pi(f)[i] \geq \zeta.
\end{equation*}
Thus, $\Hat{\mathcal{C}} = \mathcal{C}$.
\end{proof}

This property ensures that the final outcome of MuR-CPD is not fundamentally dependent on the specific initial value of $\zeta$, although setting a proper value of $\zeta$ reduces the number of queries required to reach the optimal performance.

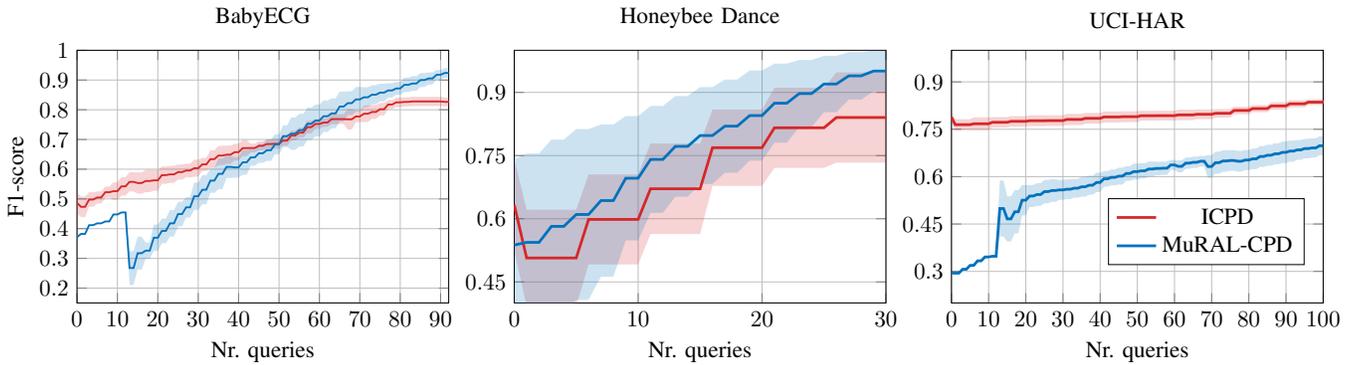
\begin{figure*}[t]
  \centering
  \resizebox{\textwidth}{!}{
  \begin{tikzpicture}
    \begin{groupplot}[
      group style={
        group size=3 by 1,
      },
      width=0.4\textwidth,
      height=0.3\textwidth,
      xlabel={Nr.\ queries},
      ylabel={F1‐score},
      grid=both,
      grid style={line width=.1pt, draw=gray!20},
      major grid style={line width=.2pt, draw=gray!50},
    ]

      \nextgroupplot[title={BabyECG},
        xmin=0, xmax=92,
        ymin=0.15, ymax=1.0,
        xtick distance=10,
        ytick distance=0.1,
      ]
        \addplot[name path=upper, draw=none, forget plot]
          table[
            col sep=space,
            header=true,
            x=query,
            y expr=\thisrow{mean_f1_adj} + \thisrow{std_f1_adj}
          ] {data/results/babyecg_icpd.txt};
        \addplot[name path=lower, draw=none, forget plot]
          table[
            col sep=space,
            header=true,
            x=query,
            y expr=\thisrow{mean_f1_adj} - \thisrow{std_f1_adj}
          ] {data/results/babyecg_icpd.txt};
        \addplot[fill=red, fill opacity=0.2, forget plot] 
          fill between[of=upper and lower];
        \addplot[
          red,
          thick,
        ]
          table[
            col sep=space,
            header=true,
            x=query,
            y=mean_f1_adj
          ] {data/results/babyecg_icpd.txt};
          
        \addplot[name path=upper, draw=none, forget plot]
          table[
            col sep=space,
            header=true,
            x=query,
            y expr=\thisrow{mean_f1} + \thisrow{std_f1}
          ] {data/results/babyecg_muralcpd.txt};
        \addplot[name path=lower, draw=none, forget plot]
          table[
            col sep=space,
            header=true,
            x=query,
            y expr=\thisrow{mean_f1} - \thisrow{std_f1}
          ] {data/results/babyecg_muralcpd.txt};
        \addplot[fill=blue, fill opacity=0.2, forget plot] 
          fill between[of=upper and lower];
        \addplot[
          blue,
          thick,
        ]
          table[
            col sep=space,
            header=true,
            x=query,
            y=mean_f1
          ] {data/results/babyecg_muralcpd.txt};

      \nextgroupplot[title={Honeybee Dance},
        ylabel={},
        xmin=0, xmax=30,
        ymin=0.4, ymax=1.0,
        xtick distance=10,
        ytick distance=0.15,]
        \addplot[name path=upper, draw=none, forget plot]
          table[
            col sep=space,
            header=true,
            x=query,
            y expr=\thisrow{mean_f1_adj} + \thisrow{std_f1_adj}
          ] {data/results/honeybee_icpd.txt};
        \addplot[name path=lower, draw=none, forget plot]
          table[
            col sep=space,
            header=true,
            x=query,
            y expr=\thisrow{mean_f1_adj} - \thisrow{std_f1_adj}
          ] {data/results/honeybee_icpd.txt};
        \addplot[fill=red, fill opacity=0.2, forget plot] 
          fill between[of=upper and lower];
        \addplot[
          red,
          very thick,
        ]
          table[
            col sep=space,
            header=true,
            x=query,
            y=mean_f1_adj
          ] {data/results/honeybee_icpd.txt};
        
        \addplot[name path=upper, draw=none, forget plot]
          table[
            col sep=space,
            header=true,
            x=query,
            y expr=\thisrow{mean_f1} + \thisrow{std_f1}
          ] {data/results/honeybee_muralcpd.txt};
        \addplot[name path=lower, draw=none, forget plot]
          table[
            col sep=space,
            header=true,
            x=query,
            y expr=\thisrow{mean_f1} - \thisrow{std_f1}
          ] {data/results/honeybee_muralcpd.txt};
        \addplot[fill=blue, fill opacity=0.2, forget plot] 
          fill between[of=upper and lower];
        \addplot[
          blue,
          very thick,
        ]
          table[
            col sep=space,
            header=true,
            x=query,
            y=mean_f1
          ] {data/results/honeybee_muralcpd.txt};
    
      \nextgroupplot[title={UCI-HAR},
        ylabel={},
        xmin=0, xmax=100,
        ymin=0.2, ymax=1.0,
        xtick distance=10,
        ytick distance=0.15,
        legend style={
            at={(0.95,0.15)},
            anchor= south east,
       },]
        \addplot[name path=upper, draw=none, forget plot]
          table[
            col sep=space,
            header=true,
            x=query,
            y expr=\thisrow{mean_f1_adj} + \thisrow{std_f1_adj}
          ] {data/results/ucihar_icpd.txt};
        \addplot[name path=lower, draw=none, forget plot]
          table[
            col sep=space,
            header=true,
            x=query,
            y expr=\thisrow{mean_f1_adj} - \thisrow{std_f1_adj}
          ] {data/results/ucihar_icpd.txt};
        \addplot[fill=red, fill opacity=0.2, forget plot] 
          fill between[of=upper and lower];
        \addplot[
          red,
          very thick,
        ]
          table[
            col sep=space,
            header=true,
            x=query,
            y=mean_f1_adj
          ] {data/results/ucihar_icpd.txt};
        
        \addplot[name path=upper, draw=none, forget plot]
          table[
            col sep=space,
            header=true,
            x=query,
            y expr=\thisrow{mean_f1} + \thisrow{std_f1}
          ] {data/results/ucihar_muralcpd.txt};
        \addplot[name path=lower, draw=none, forget plot]
          table[
            col sep=space,
            header=true,
            x=query,
            y expr=\thisrow{mean_f1} - \thisrow{std_f1}
          ] {data/results/ucihar_muralcpd.txt};
        \addplot[fill=blue, fill opacity=0.2, forget plot] 
          fill between[of=upper and lower];
        \addplot[
          blue,
          very thick,
        ]
          table[
            col sep=space,
            header=true,
            x=query,
            y=mean_f1
          ] {data/results/ucihar_muralcpd.txt};

        \addlegendimage{red, thick}\addlegendentry{ICPD}
        \addlegendimage{blue, thick}\addlegendentry{MuRAL‐CPD}

    \end{groupplot}
  \end{tikzpicture}
   }
  \caption{Comparison between F1-scores achieved by ICPD and MuRAL-CPD on benchmark datasets across 10 repetitions. Solid lines represent the mean values across 10 repetitions, while shaded regions indicate the standard deviation across these repetitions.}
  \label{fig:results}
\end{figure*}

\section{Experiments}\label{sec:experiments}
In this section, we present the experimental setup and empirical results that validate the effectiveness of our proposed method. We evaluate MuRAL-CPD on four real-world time series datasets across different domains, comparing its performance against a state-of-the-art semi-supervised change point detection method. We describe the configuration of the feature extraction pipeline and of the active learning protocol, and the evaluation metrics used. Finally, we provide both quantitative and qualitative analyses to highlight the advantages of our approach in terms of accuracy, convergence speed, and sample efficiency.

The Python code can be downloaded at \href{https://github.com/stefanobertolasi/mural_cpd.git}{github.com/stefanobertolasi/mural\_cpd.git}.

\begin{table}[t]
\centering
\caption{Hyperparameters used for each dataset. $K$ is the number of wavelet decomposition levels, $w$ is the sliding window size used in the discrepancy score computation, and $\eta$ is the tolerance window used for active learning queries.}
\label{tab:dataset_parameters}
\vspace{0.2cm}
\resizebox{\columnwidth}{!}{%
\begin{tabular}{>{\centering\arraybackslash}p{3cm}|
                >{\centering\arraybackslash}p{1cm}|
                >{\centering\arraybackslash}p{1cm}|
                >{\centering\arraybackslash}p{1cm}}
Dataset & $K$ & $w$ & $\eta$ \\ \hline
BabyECG  & 5   & 15  & 15  \\
UCI-HAR  & 2   & 12  & 8   \\
HoneyBee & 5   & 30  & 15  \\
USC-HAD  & 6   & 100 & 100 \\
\end{tabular}%
}
\end{table}

\begin{figure}[t]
  \centering
  \begin{tikzpicture}
    \begin{axis}[
      width=\columnwidth,
      height=0.6\columnwidth,
      title={USC-HAD},
      xlabel={Nr. queries},
      ylabel={Score},
      xmin=0, xmax=51,
      ymin=0, ymax=1,
      xtick distance=10,
      ytick distance=0.2,
      grid=both,
      grid style={line width=.1pt, draw=gray!20},
      major grid style={line width=.2pt, draw=gray!50},
      legend style={
        at={(0.97,0.03)},
        anchor=south east,
      },
      legend cell align=left,
    ]

      \addplot[name path=upper, draw=none, forget plot]
          table[
            col sep=space,
            header=true,
            x=query,
            y expr=\thisrow{mean_f1} + \thisrow{std_f1}
          ] {data/results/uschad_muralcpd.txt};
        \addplot[name path=lower, draw=none, forget plot]
          table[
            col sep=space,
            header=true,
            x=query,
            y expr=\thisrow{mean_f1} - \thisrow{std_f1}
          ] {data/results/uschad_muralcpd.txt};
        \addplot[fill=blue, fill opacity=0.2, forget plot] 
          fill between[of=upper and lower];
        \addplot[
          blue,
          very thick,
        ]
          table[
            col sep=space,
            header=true,
            x=query,
            y=mean_f1
          ] {data/results/uschad_muralcpd.txt};
        \addlegendentry{F1-score}

          \addplot[name path=upper, draw=none, forget plot]
          table[
            col sep=space,
            header=true,
            x=query,
            y expr=\thisrow{mean_prec} + \thisrow{std_prec}
          ] {data/results/uschad_muralcpd.txt};
        \addplot[name path=lower, draw=none, forget plot]
          table[
            col sep=space,
            header=true,
            x=query,
            y expr=\thisrow{mean_prec} - \thisrow{std_prec}
          ] {data/results/uschad_muralcpd.txt};
        \addplot[fill=orange, fill opacity=0.2, forget plot] 
          fill between[of=upper and lower];
        \addplot[
          orange,
          very thick,
        ]
          table[
            col sep=space,
            header=true,
            x=query,
            y=mean_prec
          ] {data/results/uschad_muralcpd.txt};
        \addlegendentry{Precision}

          \addplot[name path=upper, draw=none, forget plot]
          table[
            col sep=space,
            header=true,
            x=query,
            y expr=\thisrow{mean_rec} + \thisrow{std_rec}
          ] {data/results/uschad_muralcpd.txt};
        \addplot[name path=lower, draw=none, forget plot]
          table[
            col sep=space,
            header=true,
            x=query,
            y expr=\thisrow{mean_rec} - \thisrow{std_rec}
          ] {data/results/uschad_muralcpd.txt};
        \addplot[fill=green, fill opacity=0.2, forget plot] 
          fill between[of=upper and lower];
        \addplot[
          green,
          very thick,
        ]
          table[
            col sep=space,
            header=true,
            x=query,
            y=mean_rec
          ] {data/results/uschad_muralcpd.txt};
        \addlegendentry{Recall}

    \end{axis}
  \end{tikzpicture}
  \caption{Evolution of F1-score, precision, and recall as a function of the number of user queries on the USC-HAD dataset using MuRAL-CPD. Solid lines represent the mean values across 5 USC-HAD sequences, each repeated 10 times, while shaded regions indicate the standard deviation across these repetitions.}
  \label{fig:uschad_results}
\end{figure}
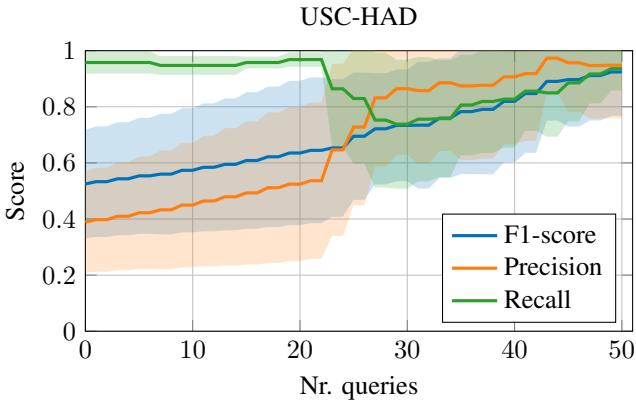
\subsection{Benchmark Datasets and Evaluation Metrics}
We consider four publicly available real-world datasets spanning three domains: human activity recognition (HAR), physiological monitoring (PM), and behavioral tracking (BT). 

In the HAR category, we include the USC-HAD~\cite{uschad} and UCI-HAR~\cite{ucihar} datasets. The USC-HAD dataset was collected using tri-axial MEMS accelerometers and includes motion recordings from 14 subjects with varying gender, age, weight, and height. Similarly, UCI-HAR contains data captured from both a tri-axial MEMS accelerometer and a gyroscope. In this case, the raw sensor signals are preprocessed to extract statistical features in both the time and frequency domains, resulting in a high-dimensional dataset with 561 channels. Due to the high redundancy among features, and in line with prior work~\cite{ICPD}, we use only the first 50 features for our experiments. In both cases change points within the series represent changes in the activity performed by the subject, spanning across various actions such as sitting and running.

In PM we consider the BabyECG~\cite{babyecg} dataset. It records the 1-dimensional heart-rate from a 66-old infant for one night. Specifically, it contains 2048 observations, one every 16 seconds and change points within it correspond to transitions between different sleep stages.

In BT we tested the Honeybee dance dataset~\cite{honeybee}. Widely used as a benchmark in CPD algorithms, it tracks the movements of a bee during a 'dance'. It consists of 6 sequences of three channels with varying length where CP corresponds to transitions between different waggle dance stages. 

In line with prior works, we adopt the evaluation criterion proposed by~\cite{TIRE}. A predicted CP $\tau$ is considered a correct detection for a ground truth change point $\tau^*$ if it lies within a tolerance window of width $\eta$ centered on $\tau^*$, and no other predicted point is closer to $\tau^*$. Formally, for a given ground truth change point $\tau^* \in \{\tau^*_j\}$, the corresponding matched prediction $\tau \in \{\tau_j\}$ must satisfy:
\begin{equation}
    |\tau - \tau^*| \leq \eta \quad \text{and} \quad \forall \hat{\tau} \in \{\tau_j\}, \ |\hat{\tau} - \tau^*| \geq |\tau - \tau^*|.
\end{equation}

Each predicted change point can match at most one ground truth point, and vice versa. Based on these one-to-one assignments, we compute standard detection metrics: precision, recall, and F1-score. Among these, we report the F1-score as the primary evaluation metric, since it is widely adopted in the CPD literature and provides a balanced measure of detection accuracy.

\subsection{Experimental Setting}\label{sec:experimental_setting}
The wavelet function used for multiscale representation via MDWD is the Daubechies-2 (db2) wavelet, and the decomposition is performed channel-wise. This wavelet defines the filters used in the convolution operations described in Equation~\ref{eq:conv}. The number of decomposition levels $K$ is chosen individually for each dataset, based on the signal length and the expected frequency components present in the data. To realign all sub-band scores with the original time axis, we apply an upsampling procedure using the \texttt{resample} function from the \texttt{scipy.signal} module, which performs Fourier-based resampling.

The key hyperparameters used in our experiments are summarized in Table~\ref{tab:dataset_parameters}, including the number of wavelet decomposition levels $K$, the sliding-window size $w$ for the Normal Discrepancy Score, and the query window tolerance $\eta$ used in the active learning module. All datasets are normalized to zero mean and unit variance before processing.

After each query iteration, the model hyperparameters $(\{\theta_k\}, \zeta)$ are re-optimized based on the $F_1$-score computed over the supervised domain $\mathcal{D}_S$, and the updated parameters are used to regenerate the score profile for the next round. The optimization is performed using the Mango optimizer~\cite{mango}, a Bayesian optimization framework which performs up to 50 function evaluations within a parameter search space of size 5000. This procedure is triggered after the first 10 user queries and subsequently repeated every two feedback iterations.

\subsection{Results}
In this section we provide a comparison between MuRAL-CPD and the actual state-of-the-art method ICPD~\cite{ICPD}. Figure~\ref{fig:results} provides the F1-scores as a function of the number of queries. Each method is evaluated across three real-world datasets, with curves averaged over 10 runs and shaded areas representing standard deviation.

We first observe in Figure~\ref{fig:results} that both MuRAL-CPD and ICPD consistently improve over their unsupervised backbone models as the number of queries increases. This confirms that incorporating user feedback through active learning leads to an improvement in segmentation performance, validating the core assumption behind both approaches.

We note that the initial performance of our method is lower than that of ICPD. This is expected: while ICPD builds upon TIRE—an encoder-decoder-based unsupervised model that produces relatively accurate initial candidate change points—MuRAL-CPD starts from a much lighter initialization, designed to be computationally efficient and highly tunable. Our method relies on a faster unsupervised algorithm with a large number of hyperparameters, which are optimized progressively throughout the active learning loop in order to actively adapt the model to the current time series. In contrast, ICPD initializes with a stronger unsupervised backbone, but its performance improves more slowly. This slower convergence is attributable to the model they rely within the active learning loop. In contrast to our approach, ICPD employs a supervised one-class SVM that classifies each sample as either a change point or a normal point, based on the initial pseudo-labels provided by TIRE and the labels progressively collected through user feedback. However, since the one-class SVM is trained exclusively on samples labeled as normal, any misclassified change points that have not yet been corrected by the user can significantly hinder the model's convergence. This dependency on clean supervision slows down the learning process, particularly in the early stages of the active loop. In contrast, our approach is unaffected by early misclassifications, since the optimization process is driven exclusively by the user-labeled change points, and does not rely on noisy pseudo-labels.

On the USC-HAD dataset, we report only the performance of MuRAL-CPD, as the preprocessing phase required by TIRE—used as a backbone in ICPD—proved computationally infeasible due to the length and complexity of the series. This is useful to concentrate on a particular behavior of our model. Notably, immediately after the first 10 queries—when the optimization procedure is started—the model adjusts the decision threshold upward to better align with the user-provided labels. This adjustment results in a sudden increase in precision, indicating a more conservative approach that prioritizes correctness over coverage. As a consequence, recall temporarily drops, reflecting the model's tendency to filter out uncertain detections.
This early rise in precision—achieved at the cost of recall—is a characteristic behavior of our approach. It shows that the model quickly learns to reduce false positives by tuning its hyperparameters based on the initial feedback. As more queries are collected, the recall gradually recovers and stabilizes.

\subsection{Ablation Study}

\begin{figure}[t]
  \centering
  \begin{tikzpicture}
    \begin{axis}[
      width=\columnwidth,
      height=0.6\columnwidth,
      title={Honeybee Dance},
      xlabel={Nr. queries},
      ylabel={F1-Score},
      xmin=0, xmax=30,
      ymin=0, ymax=1,
      xtick distance=5,
      ytick distance=0.2,
      grid=both,
      grid style={line width=.1pt, draw=gray!20},
      major grid style={line width=.2pt, draw=gray!50},
      legend style={
        at={(0.97,0.03)},
        anchor=south east,
      },
      legend cell align=left,
    ]

      \addplot[name path=upper, draw=none, forget plot]
          table[
            col sep=space,
            header=true,
            x=query,
            y expr=\thisrow{mean_f1} + \thisrow{std_f1}
          ] {data/results/honeybee_muralcpd.txt};
        \addplot[name path=lower, draw=none, forget plot]
          table[
            col sep=space,
            header=true,
            x=query,
            y expr=\thisrow{mean_f1} - \thisrow{std_f1}
          ] {data/results/honeybee_muralcpd.txt};
        \addplot[fill=blue, fill opacity=0.2, forget plot] 
          fill between[of=upper and lower];
        \addplot[
          blue,
          very thick,
        ]
          table[
            col sep=space,
            header=true,
            x=query,
            y=mean_f1
          ] {data/results/honeybee_muralcpd.txt};
        \addlegendentry{MuRAL-CPD}

          \addplot[name path=upperB, draw=none, forget plot]
          table[
            col sep=space,
            header=true,
            x=query,
            y expr=\thisrow{mean_f1} + \thisrow{std_f1}
          ] {data/ablation/ablation_threshold.txt};
        \addplot[name path=lowerB, draw=none, forget plot]
          table[
            col sep=space,
            header=true,
            x=query,
            y expr=\thisrow{mean_f1} - \thisrow{std_f1}
          ] {data/ablation/ablation_threshold.txt};
        \addplot[fill=orange, fill opacity=0.2, forget plot] 
          fill between[of=upperB and lowerB];
        \addplot[
          orange,
          very thick,
        ]
          table[
            col sep=space,
            header=true,
            x=query,
            y=mean_f1
          ] {data/ablation/ablation_threshold.txt};
        \addlegendentry{MuRAL-CPD-Max}
    \end{axis}
  \end{tikzpicture}
  \caption{Comparison between MuRAL-CPD and its ablated variant MuRAL-CPD-Max under different threshold initialization strategies on the Honeybee Dance dataset. While MuRAL-CPD benefits from a more informed threshold initialization and achieves better early performance, both methods converge to similar F1-scores after approximately 25 queries. Solid lines represent the mean across 10 runs and 6 dataset while shaded areas represent the standard deviation across those repetitions.}
  \label{fig:threshold_init}
\end{figure}
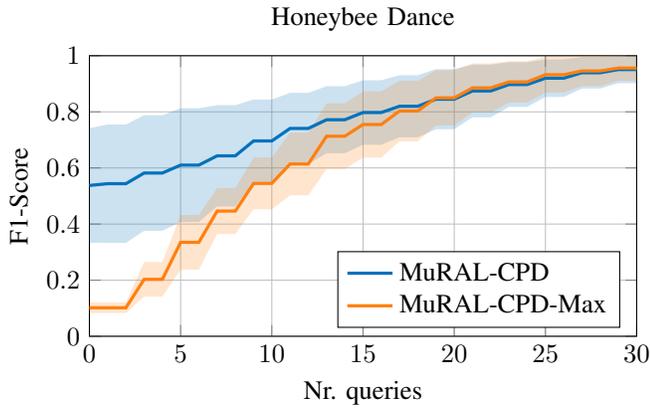

To better understand the contribution of individual components in our framework, we conduct a set of ablation studies focusing on three key design choices: the initialization of the decision threshold, the query selection policy, and the initial exploration. These experiments aim to assess the sensitivity of the model to each component and to validate the rationale behind our architectural choices.
\subsubsection{Initialization of the decision threshold}
As introduced in Section~\ref{sec:hyper_init}, the decision threshold plays a crucial role in the active learning loop. This is mostly due to the query policy adopted by MuRAL-CPD, which prioritizes samples located near the current decision boundary. As a result, the initial choice of threshold strongly influences which samples are selected for annotation in the early stages.

To evaluate this effect, we compare the standard MuRAL-CPD to a variant named MuRAL-CPD-Max, which initializes the threshold using a naive strategy: setting it equal to the maximum score value in the sequence. This choice delays the model’s ability to focus on uncertain regions, since all the most uncertain scores are far from the decision boundary.

Figure~\ref{fig:threshold_init} shows that MuRAL-CPD-Max performs noticeably worse than MuRAL-CPD in the early rounds of active learning. However, as more annotations are collected and the model parameters are refined, the performance gap gradually narrows. In the end, both methods converge to comparable levels of F1-score.

This experiment highlights two important findings: first, the initial threshold is a highly influential hyperparameter; second, despite its importance, the algorithm shows strong robustness to its initialization. Even with suboptimal initial values, the model can recover and achieve competitive segmentation performance through iterative refinement, as expected from \ref{prop:1}

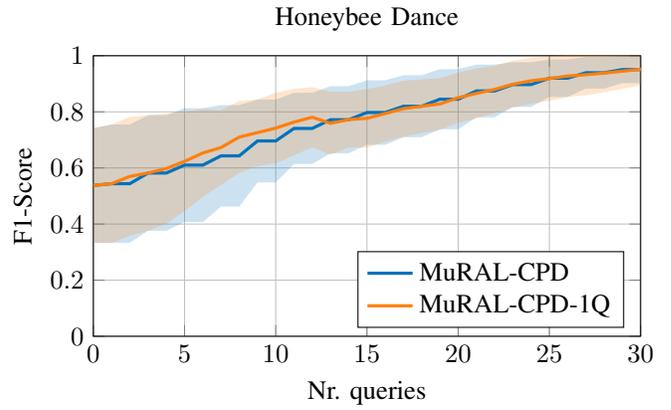
\begin{figure}[t]
  \centering
  \begin{tikzpicture}
    \begin{axis}[
      width=\columnwidth,
      height=0.6\columnwidth,
      title={Honeybee Dance},
      xlabel={Nr. queries},
      ylabel={F1-Score},
      xmin=0, xmax=30,
      ymin=0, ymax=1,
      xtick distance=5,
      ytick distance=0.2,
      grid=both,
      grid style={line width=.1pt, draw=gray!20},
      major grid style={line width=.2pt, draw=gray!50},
      legend style={
        at={(0.97,0.03)},
        anchor=south east,
      },
      legend cell align=left,
    ]

      \addplot[name path=upper, draw=none, forget plot]
          table[
            col sep=space,
            header=true,
            x=query,
            y expr=\thisrow{mean_f1} + \thisrow{std_f1}
          ] {data/results/honeybee_muralcpd.txt};
        \addplot[name path=lower, draw=none, forget plot]
          table[
            col sep=space,
            header=true,
            x=query,
            y expr=\thisrow{mean_f1} - \thisrow{std_f1}
          ] {data/results/honeybee_muralcpd.txt};
        \addplot[fill=blue, fill opacity=0.2, forget plot] 
          fill between[of=upper and lower];
        \addplot[
          blue,
          very thick,
        ]
          table[
            col sep=space,
            header=true,
            x=query,
            y=mean_f1
          ] {data/results/honeybee_muralcpd.txt};
        \addlegendentry{MuRAL-CPD}

          \addplot[name path=upperB, draw=none, forget plot]
          table[
            col sep=space,
            header=true,
            x=query,
            y expr=\thisrow{mean_f1} + \thisrow{std_f1}
          ] {data/ablation/ablation_single_lc.txt};
        \addplot[name path=lowerB, draw=none, forget plot]
          table[
            col sep=space,
            header=true,
            x=query,
            y expr=\thisrow{mean_f1} - \thisrow{std_f1}
          ] {data/ablation/ablation_single_lc.txt};
        \addplot[fill=orange, fill opacity=0.2, forget plot] 
          fill between[of=upperB and lowerB];
        \addplot[
          orange,
          very thick,
        ]
          table[
            col sep=space,
            header=true,
            x=query,
            y=mean_f1
          ] {data/ablation/ablation_single_lc.txt};
        \addlegendentry{MuRAL-CPD-1Q}
    \end{axis}
  \end{tikzpicture}
\caption{Comparison between MuRAL-CPD and its ablated variant MuRAL-CPD-1Q on the Honeybee Dance dataset. While MuRAL-CPD queries two least-certain samples per iteration before updating the model, MuRAL-CPD-1Q performs an optimization step after each individual query. Solid lines represent the mean across 10 runs on 6 different sequences, and shaded areas indicate the standard deviation.}
  \label{fig:optimize}
\end{figure}

\subsubsection{Influence of the optimization frequency}
In the MuRAL-CPD algorithm, the query policy selects two candidate samples to be labeled at each iteration. These samples are simultaneously presented to the user, after which the model parameters are re-optimized based on the newly acquired labels. In this subsection, we investigate how this simultaneous querying strategy influences the convergence speed of the algorithm.

To this end, we introduce a variant, denoted as MuRAL-CPD-1Q, in which only one sample is queried at a time and the model is re-optimized after every single annotation, rather than after every pair. 

As shown in Figure~\ref{fig:optimize}, this alternative strategy does not lead to any improvement in performance. On the contrary, it slows down the entire active learning loop. The lack of measurable benefits is likely due to the instability introduced by performing optimization on very limited supervision at each step. In contrast, optimizing after each pair of queries provides a more stable update signal and helps the model improve more consistently.

\begin{figure}[t]
  \centering
  \begin{tikzpicture}
    \begin{axis}[
      width=\columnwidth,
      height=0.6\columnwidth,
      title={Honeybee Dance},
      xlabel={Nr. queries},
      ylabel={F1-Score},
      xmin=0, xmax=30,
      ymin=0, ymax=1,
      xtick distance=5,
      ytick distance=0.2,
      grid=both,
      grid style={line width=.1pt, draw=gray!20},
      major grid style={line width=.2pt, draw=gray!50},
      legend style={
        at={(0.97,0.03)},
        anchor=south east,
      },
      legend cell align=left,
    ]

      \addplot[name path=upper, draw=none, forget plot]
          table[
            col sep=space,
            header=true,
            x=query,
            y expr=\thisrow{mean_f1} + \thisrow{std_f1},
          ] {data/results/honeybee_muralcpd.txt};
        \addplot[name path=lower, draw=none, forget plot]
          table[
            col sep=space,
            header=true,
            x=query,
            y expr=\thisrow{mean_f1} - \thisrow{std_f1}
          ] {data/results/honeybee_muralcpd.txt};
        \addplot[fill=blue, fill opacity=0.2, forget plot] 
          fill between[of=upper and lower];
        \addplot[
          blue,
          very thick,
        ]
          table[
            col sep=space,
            header=true,
            x=query,
            y=mean_f1
          ] {data/results/honeybee_muralcpd.txt};
        \addlegendentry{MuRAL-CPD}

          \addplot[name path=upperB, draw=none, forget plot]
          table[
            col sep=space,
            header=true,
            x=query,
            y expr=\thisrow{mean_f1} + \thisrow{std_f1}
          ] {data/ablation/ablation_no_warmup.txt};
        \addplot[name path=lowerB, draw=none, forget plot]
          table[
            col sep=space,
            header=true,
            x=query,
            y expr=\thisrow{mean_f1} - \thisrow{std_f1}
          ] {data/ablation/ablation_no_warmup.txt};
        \addplot[fill=orange, fill opacity=0.2, forget plot] 
          fill between[of=upperB and lowerB, forget plot];
        \addplot[
          orange,
          very thick,
        ]
          table[
            col sep=space,
            header=true,
            x=query,
            y=mean_f1
          ] {data/ablation/ablation_no_warmup.txt};
        \addlegendentry{MuRAL-CPD-No-Warm}
    \end{axis}
  \end{tikzpicture}
  \caption{Comparison between MuRAL-CPD and its ablated variant MuRAL-CPD-No-Warm on the Honeybee Dance dataset. The difference lies in the presence of an initial warm-up phase: MuRAL-CPD collects 10 labels before starting optimization, while MuRAL-CPD-No-Warm begins optimization immediately after the first query. Solid lines represent the mean across 10 runs on 6 different sequences, and shaded areas indicate the standard deviation.}
  \label{fig:warm_up}
\end{figure}
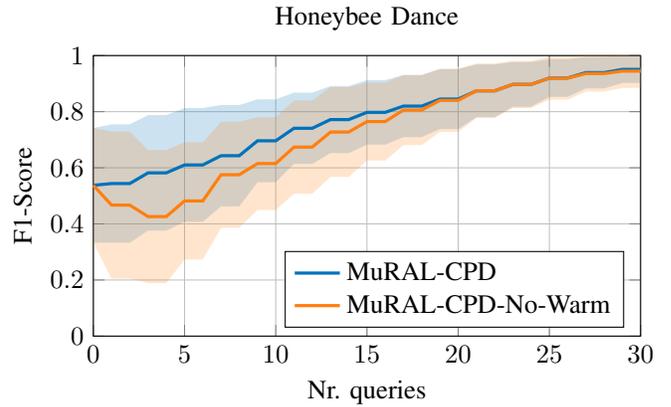

\subsubsection{Influence of the warm-up phase}
As introduced in the experimental setting described in Section~\ref{sec:experimental_setting}, the results from the MuRAL-CPD algorithm presented in the previous section share an initial warm-up phase, during which labels are collected without performing any optimization. The model parameters are updated only after the first 10 labels have been collected. In this subsection, we explore the alternative strategy of running the optimization step immediately after the first two queries. To this end, we define a fourth model, named MuRAL-CPD-No-Warm, which entirely skips the initial warm-up phase.

As illustrated in Figure~\ref{fig:warm_up}, both methods converge to the same performance level as the number of queries increases. However, one can observe a consistent initial drop in performance in MuRAL-CPD-No-Warm, occurring right after the first few queries. This degradation is mainly caused by the optimizer overfitting the very limited number of initial labels, which leads to a significant drop in the recall of the algorithm.

In conclusion, although early optimization might seem advantageous for faster feedback integration, skipping the warm-up phase proves detrimental in the initial stage. A short delay before the first update helps stabilize learning and improves the robustness of the model during early exploration.

\section{Conclusion}\label{sec:conclusions}
We presented MuRAL-CPD, a semi-supervised method for multiresolution change point detection (CPD) that integrates active learning into an unsupervised backbone. By leveraging wavelet-based multiresolution decomposition and user feedback, MuRAL-CPD achieves efficient and adaptive segmentation. A key feature of MuRAL-CPD is its robustness to the initial parameter settings, ensuring consistent final results. However, we showed that a well-chosen initial decision threshold accelerates convergence, reducing the number of user queries required. Experimental results demonstrate its competitive performance over state-of-the-art methods, particularly in scenarios with limited supervision.

As a future direction, we plan to explore reinforcement learning to optimize the query selection policy in the active learning loop. This could further minimize user effort while enhancing the efficiency and accuracy of the CPD process, making MuRAL-CPD an even more powerful tool for time series analysis.

\bibliographystyle{IEEEtran}
\bibliography{references}

\end{document}